\title{Positivity and Transportation}
\newcommand{\BEAS}{\begin{eqnarray*}}
\newcommand{\EEAS}{\end{eqnarray*}}
\newcommand{\BEA}{\begin{eqnarray}}
\newcommand{\EEA}{\end{eqnarray}}
\newcommand{\BEQ}{\begin{equation}}
\newcommand{\EEQ}{\end{equation}}
\newcommand{\BIT}{\begin{itemize}}
\newcommand{\EIT}{\end{itemize}}
\newcommand{\BNUM}{\begin{enumerate}}
\newcommand{\ENUM}{\end{enumerate}}
\newcommand{\BA}{\begin{array}}
\newcommand{\EA}{\end{array}}
\newcommand{\BC}{\begin{center}}
\newcommand{\EC}{\end{center}}
\newcommand{\ones}{\mathbf 1}
\newcounter{exno}
\newcommand{\EORG}[1]{\boldsymbol{\textcolor{puorange}{#1}}}
\def\bk{\mathbf{k}}
\def\bU{\mathbf{U}}
\newcommand{\dotprod}[2]{\ensuremath{\langle #1 , #2\,\rangle}}
\newcommand{\smallmat}[1]{\left[\begin{smallmatrix}#1\end{smallmatrix}\right]}
\def\RR{\mathbb{R}}
\def\NN{\mathds{N}}
\def\Xcal{\mathcal{X}}
\def\Ncal{\mathcal{N}}
\def\OMIT#1{}
\DeclareMathOperator{\card}{card}
\DeclareMathOperator{\defi}{def}
\DeclareMathOperator{\defeq}{\overset{\defi}{=}}
\providecommand{\abs}[1]{\lvert#1\rvert}
\newif\if@borderstar
\def\bordermatrix{\@ifnextchar*{%
  \@borderstartrue\@bordermatrix@i}{\@borderstarfalse\@bordermatrix@i*}%
}
\def\@bordermatrix@i*{\@ifnextchar[{%
  \@bordermatrix@ii}{\@bordermatrix@ii[()]}
}
\def\@bordermatrix@ii[#1]#2{%
  \begingroup
    \m@th\@tempdima8.75\p@\setbox\z@\vbox{%
      \def\cr{\crcr\noalign{\kern 2\p@\global\let\cr\endline }}%
      \ialign {$##$\hfil\kern 2\p@\kern\@tempdima & \thinspace %
      \hfil $##$\hfil && \quad\hfil $##$\hfil\crcr\omit\strut %
      \hfil\crcr\noalign{\kern -\baselineskip}#2\crcr\omit %
      \strut\cr}}%
    \setbox\tw@\vbox{\unvcopy\z@\global\setbox\@ne\lastbox}%
    \setbox\tw@\hbox{\unhbox\@ne\unskip\global\setbox\@ne\lastbox}%
    \setbox\tw@\hbox{%
      $\kern\wd\@ne\kern -\@tempdima\left\@firstoftwo#1%
        \if@borderstar\kern2pt\else\kern -\wd\@ne\fi%
      \global\setbox\@ne\vbox{\box\@ne\if@borderstar\else\kern 2\p@\fi}%
      \vcenter{\if@borderstar\else\kern -\ht\@ne\fi%
        \unvbox\z@\kern-\if@borderstar2\fi\baselineskip}%
        \if@borderstar\kern-2\@tempdima\kern2\p@\else\,\fi\right\@secondoftwo#1 $%
    }\null \;\vbox{\kern\ht\@ne\box\tw@}%
  \endgroup
}
\newtheorem{theorem}{Theorem}
\newtheorem{lemma}{Lemma}
\newtheorem{remark}{Remark}
\definecolor{darkblue}{rgb}{0,0,0.5} 
\definecolor{orange}{rgb}{1,0.5,0}
\definecolor{puorange}{rgb}{0.70,0.15,0}
\def\UU{\mathbb{U}}
\def\NW{\mathbf{NW}}
\author{Marco Cuturi}
\address{Graduate School of Informatics, Kyoto University}
\email{mcuturi@i.kyoto-u.ac.jp}
\begin{document}
\maketitle

\begin{abstract}	
We prove in this paper that the weighted volume -- or generating function -- of the set of integral transportation matrices between two integral histograms $r$ and $c$ of equal sum is a positive definite kernel of $r$ and $c$ when the set of considered weights forms a positive definite matrix. The computation of this quantity, despite being the subject of a significant research effort in algebraic statistics, remains an intractable challenge for histograms of even modest dimensions. We propose an alternative kernel which, rather than considering all matrices of the transportation polytope, only focuses on a sub-sample of its vertices known as its Northwestern corner solutions. The resulting kernel is positive definite and can be computed with a number of operations $O(R^2d)$ that grows linearly in the complexity of the dimension $d$, where $R^2$ --  the total amount of sampled vertices -- is a parameter that controls the complexity of the kernel.
\end{abstract}

\section{Introduction}
Suppose that among $30$ students in a classroom, $7$ and $23$ have light and dark colored eyes respectively. You are also told that $12$ of them have light hair while $18$ have dark hair. What are all the possible populations of the 4 subgroups of students with light/light, dark/dark, light/dark and dark/light eyes and hair color respectively? Such quantities can be arranged in a $2\times 2$ matrix whose row sum vector must be equal to $[7,23]^T$ and column sum vector must be equal to $[12,18]$, $\smallmat{3&4\\9&14}$ for instance, and more generally \emph{any} integer values in the dots below that satisfy these constraints:
$$\bordermatrix[{[]}]{%
& 12 & 18 \cr
7 & \bullet & \bullet\cr
23 & \bullet & \bullet \cr
}$$	
Alternatively, suppose that two bakeries in a small village produce daily $7$ and $23$ loafs of bread each, while two restaurants in the same area each need $12$ and $18$ loafs to serve their customers every day. What are all the possible morning delivery plans of bread loafs that the two bakeries and shops can agree upon? These seemingly trivial sets of matrices coincide, and are known  in the statistics and optimization literature as the sets of \emph{contingency tables} and \emph{transportation plans} respectively.

In statistics, the problem of enumerating all such tables arises naturally in hypothesis testing. Suppose that by entering the aforementioned classroom you observe that the actual repartition of these groups is $\smallmat{5&2\\7&16}$. Such an observation intuitively suggests that eye and hair color are related, but how confident should you be about this statement? In the $2\times 2$ case presented above, the Fisher exact test~\citep{yates1934contingency} answers that question by computing the probabilities of \emph{all} possible tables outcomes if one assumes that they have been generated as the product of independent Bernoulli variables with law $p_1=7/30$ and $p_2=12/30$. By comparing all these probabilities with that of the observed table, we can conclude how reliable an independence hypothesis would be. In optimization, given a $2\times 2$ cost matrix which describes the cost (in gas, calories or time) of bringing a loaf from each bakery to each shop, finding the delivery plan with minimal cost is known as a transportation problem. Transportation problems are an extremely general class of linear programs which are known to encompass all instances of network flows~\cite[p.274]{bertsimas1997introduction}.

Optimal transportation distances~\citep{rachev1998mass,villani09} are distances between probability densities which combine both perspectives outlined above, where the probabilistic view on contingency tables is matched with the goal of computing an optimal transportation plan between two marginal probabilities given a metric on the probability space of interest. Such distances have been widely used in computer vision following the impulsion of~\citet{rubner1997earth} who used it to compare histograms of image features. When used in information retrieval tasks, transportation distances fare usually better in practice than other classical distances for histograms~\citep{Pele-iccv2009}.

Transportation distances have however two notable drawbacks. First, from a geometric point of view, transportation distances are deficient in the sense that they are not negative definite nor Hilbertian. Negative definiteness carries many favorable properties, among which the possibility to create Euclidean embeddings from which the metric can be accurately recovered, as well as the possibility to turn the distance into a positive definite kernel by simple exponentiation, as a radial basis function. Because of this deficiency, there is no known positive definite counterpart to transportation distances that can leverage the complexity of the set of contingency tables. Second, from a computational point of view, the computational cost of computing transportation distances grows in most cases of interest at least quadratically in the dimension $d$ of the histograms, which can be prohibitive for many applications.

We try to address both issues in this work. The main contribution of this paper is theoretical: after providing some background material and motivation in Section~\ref{sec:back} we prove in Section~\ref{sec:trans} that the generating function of the set of all contingency tables between two integral histograms is a positive definite kernel. Our second contribution is practical: we propose in Section~\ref{sec:nwc} a positive definite kernel that leverages these ideas while still being computationally tractable. 

\section{Background}\label{sec:back}
\subsection{The Transportation Polytope and the Set of Contingency Tables} 
We review in this section a few definitions, notations and results of interest to prove our result. In the following, we write $\dotprod{\,\cdot\,}{\cdot}$ for both the Frobenius dot-product and the usual dot-product of vectors.

Given a dimension $d$ fixed throughout this paper, for two vectors $r,c\in \RR^d$, let $U(r,c)$ be the transportation polytope of $r$ and $c$, namely the subset of nonnegative matrices in $\RR^{d\times d}$ defined as:
$$U(r,c)\defeq \{X\in\RR_+^{d\times d}\; |\; X\ones_d=r, X^T\ones_d=c\},$$
where $\ones_d$ is the $d$ dimensional vector of ones. $U(r,c)$ contains all nonnegative $d\times d$ matrices with row and column sums $r$ and $c$ respectively. It is easy to check that $U(r,c)$ is non-empty if and only if all coordinates of $r$ and $c$ are non-negative and if the total masses of $r$ and $c$ are the same, that is $r^T\ones_d=c^T\ones_d$. We will consider in most of this work \emph{integral} vectors $r$ and $c$ taken in the set $\Sigma_N$ of $d$-dimensional integral histograms with total mass $N\in\NN$,
$$
\Sigma_d^N \defeq \{r \in\NN^{d} \;|\; r_1+\cdots+r_d = N\}.
$$
We will also focus accordingly on the subset $\UU(r,c)$ of $U(r,c)$ that contains all integral transportation matrices, alternatively known as \emph{contingency tables}~\citep{lauritzen1982lectures,everitt1992analysis}:
$$\UU(r,c)\defeq U(r,c) \cap \NN^{d\times d}.$$

\subsection{Weighted Volumes of Contingency Tables and Particular Cases of Positivity} 
Ranging from early work by~\citet{yates1934contingency,good1976} to~\citet{diaconisefron,cryan2003polynomial,chen2005sequential}, the computation of elementary statistics about $\UU(r,c)$ has attracted considerable attention. Many of the ideas of this paper build upon recent work by~\citeauthor{barvinok2008enumerating}, most notably on his study of the generating function of $\UU(r,c)$, defined for $M\in \RR^{d\times d}$ as
$$V(r,c\,;M)\defeq \sum_{X\in \UU(r,c)} e^{-\dotprod{X}{M}}.$$
The generating function can be related to the \emph{weighted} volume~\citep[p.2]{barvinok2008enumerating} of $\UU(r,c)$, defined for any nonnegative $d\times d$ matrix $K\in\RR_+^{d\times d}$ as:
$$T(r,c\,;K) \defeq \sum_{X\in \UU(r,c)} \prod_{ij}^d k_{ij}^{x_{ij}}.$$ 
Both definitions are equivalent since if we agree that $k_{ij}=e^{-m_{ij}}$ then $T(r,c\,;K)=V(r,c\,;M)$. Because all of our results rely on $K$'s properties, we will mostly use the weighted volume formulation in this paper. Some sections in this paper, notably \S\ref{subsec:rel} below and \S\ref{sec:nwc}, are better understood with the generating function formulation.

~\citet[Prop.2]{cuturi07permanents} proved that the cardinal of the set $\UU(r,c)$ is a positive definite kernel of $r$ and $c$ using the Robinson-Schensted-Knuth bijection~\citep{Knuth70} that maps each contigency table to a pair of Young tableaux with contents $r$ and $c$ and the same pattern. It is easy to see that the cardinal of $\UU(r,c)$ is equal to $T(r,c;\ones_{d\times d})$ or $V(r,c\,;\mathbf{0}_{d\times d})$. ~\citet[Prop.1]{cuturi07permanents} also proved that $T(r,c\,;K)$ is a positive definite kernel of $r$ and $c$ if both are \emph{binary} histograms and $K$ is a nonnegative $d\times d$ positive definite matrix. Since the computation of $T$ entails in that case the computation of the permanent of a Gram matrix,~\citet{cuturi07permanents} called this kernel the permanent kernel. The main contribution of our paper is to prove in Theorem~\ref{theo:genfpsd} that the map $(r,c)\in\Sigma_d^N \mapsto T(r,c\,;K)$ is positive definite whenever $K$ is a $d\times d$ positive definite matrix.

\begin{figure}
\BC\scalebox{.75}{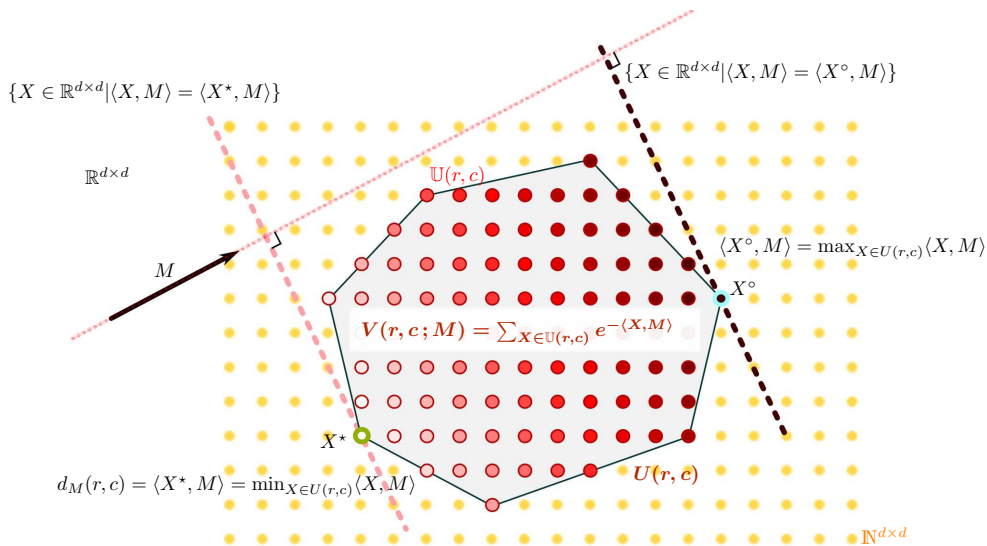}\EC
\caption{Schematic representation of the set $\UU(r,c)$ of contingency tables seen as the intersection between the lattice of integral matrices $\NN^{d\times d}$ with the transportation polytope $U(r,c)$. Each red dot stands for an integral plan $X\in\UU(r,c)$. The inner color in each red dot stands for the value of $\dotprod{X}{M}$, which can be seen to go gradually from $\dotprod{X^\star}{M}$ to $\dotprod{X^\circ}{M}$, that is from the minimum to the maximum of $\dotprod{\cdot}{M}$ over $U(r,c)$, or equivalently $\UU(r,c)$. The generating function $V(r,c;M)$ of $\UU(r,c)$ considers the contributions of \emph{all} contingency tables.}\label{fig:mainfig}
\end{figure}

\subsection{Relationships with the Optimal Transportation Distance}\label{subsec:rel}
Given a $d\times d$ cost matrix $M$, one can quantify the cost of mapping $r$ to $c$ using a transportation matrix $X$ as $\dotprod{X}{M}$.
The minimum of this cost is called the optimal transportation cost, defined as:
$$
d_M(r,c) \defeq \min_{X\in U(r,c)} \dotprod{X}{M}.
$$
A classical result of optimization in network flows~\citep[Theo. 7.5]{bertsimas1997introduction} guarantees the existence of a contingency table $X^\star\in\UU(r,c)$ which achieves this minimum,  as schematically represented in Figure~\ref{fig:mainfig}. Such an optimal table $X^\star$ can be obtained algorithmically in polynomial time~\cite[\S9]{ahuja1993network}.

The minimal cost $d_M(r,c)$ turns out to be a distance~\cite[\S6.1]{villani09} whenever the matrix $M$ is itself a metric. This distance is also known as the Wasserstein distance, Monge-Kantorovich's, Mallow's or Earth Mover's~\citep{rubner1997earth} in the computer vision literature. The transportation distance is not negative definite in the general case, as shown by counterexamples~\citep{naor-2005} and embedding distortion results~\citep{indyk2009}. Although some metrics $M$ can yield a negative definite distance\footnote{Setting $M=\ones_{d\times d}-I_d$ yields the total variation distance between discrete probabilities, which is half the Manhattan or $l_1$ distance between $r$ and $c$. All these distances are known to be negative definite.}, characterizing the negative definiteness of $d_M$ remains an open question. Despite this fact, transportation distances have been used in practice to derive a \emph{pseudo}-positive definite kernel: both~\citet[\S4.C]{emd2004} or~\citet[\S2.3]{emd2006} introduce the exponential of (minus) the minimum of $\dotprod{X}{M}$, 
\begin{equation}\label{eq:km}k_M (r,c) = e^{-d_M(r,c)} = \exp\left(-\min_{X\in U(r,c)}\dotprod{X}{M}\right),\end{equation}
to form an undefinite kernel which can be used to compare histograms in practice. We prove that, although the value $\exp(- \langle X^\star, M\rangle)$ in itself is not a positive definite kernel, the sum of each term $\exp(- \langle X, M\rangle)$ over \emph{all} possible contingency tables in $\UU(r,c)$ is positive definite when $M$ has suitable properties. The generating function $V_{rc}$ can be interpreted as the exponential of (minus) the soft-minimum of $\dotprod{X}{M}$ over all contingency tables,
$$
V(r,c\,;M) = \exp\left(-\,\underset{X\in \UU(r,c)}{\text{softmin}}\,\dotprod{X}{M}\right)= e^{\log\sum_{X\in \UU(r,c)} e^{-\dotprod{X}{M}}}= \sum_{X\in \UU(r,c)} e^{-\dotprod{X}{M}},
$$
where the soft-minimum of a finite family of scalars $(u_i)$ is $$\,\underset{i}{\text{softmin}}\,u_i\,\defeq -\log\sum_{i} e^{-u_i}.$$ This expression relates our results in this work to previous applications of soft minimums to derive positive definite kernels from combinatorial distances for strings~\citep{VerSaiAku04}, time series~\citep{cuturi07kernelSHORT} and trees~\citep{shin2011mapping}. These ideas are summarized in Figure~\ref{fig:mainfig}.

\subsection{Generalized Permutations}\label{subsec:genperm}
We close this section by providing some tools to prove the result. We write $S_N$ for the group of permutations over the set $\{1,\cdots,N\}$. For any vector $\alpha$ of size $N$ and permutation $\pi\in S_N$, we write $\alpha_\pi$ for the permuted vector with coordinates $\alpha_\pi=[\alpha_{\pi(1)}\,\alpha_{\pi(2)}\,\cdots\,\alpha_{\pi(N)}]$ and $\alpha_{p\cdot\cdot q}$ for the subvector $[\alpha_{p}\,\cdots\,\alpha_{q}]$ when $1\leq p \leq q \leq N$. For two vectors $\rho,\gamma$ of $\{1,\cdots,d\}^N$, the $2\times N$ array
$$
(\rho\,;\gamma) \defeq \begin{bmatrix} \rho_1 & \rho_2 &\cdots & \rho_N \\ \gamma_1 & \gamma_2 &\cdots & \gamma_N \\ \end{bmatrix},
$$
is called a generalized permutation~\citep{Knuth70}. To any generalized permutation $(\rho\,;\gamma)$ corresponds a $d\times d$ integral matrix $\chi(\rho\,;\gamma)$ defined as~\cite[p.41]{fulton1997young}: 
\begin{equation}\label{eq:fulton}[\chi(\rho\,;\gamma)]_{ij} \defeq \sum_{n=1}^N\ones_{\rho_t=i}\cdot\ones_{\gamma_t=j},\quad 1\leq i,j\leq d.\end{equation}
Consider the following example where $d=3,N=8$ and $$\rho=\begin{bmatrix}1\,2\,2\,2\,1\,3\,1\,3\;\end{bmatrix}, \gamma=\begin{bmatrix}1\,1\,2\,1\,3\,3\,3\,3\;\end{bmatrix}, 
(\rho\,;\gamma) = \begin{bmatrix}1\,2\,2\,2\,1\,3\,1\,3 \\1\,1\,2\,1\,3\,3\,3\,3\end{bmatrix}, \chi(\rho\,;\gamma) =\begin{bmatrix} 1 & 0 & 2\\ 2&1 &0 \\ 0 & 0 & 2 \end{bmatrix}.
$$
If we consider now the permutation $\pi=[3\,6\,8\,5\,2\,1\,4\,7]$ we have that
$$\rho=\begin{bmatrix}1\,2\,2\,2\,1\,3\,1\,3\;\end{bmatrix}, \gamma_\pi=\begin{bmatrix}2\,3\,3\,3\,1\,1\,1\,3\;\end{bmatrix}, 
(\rho\,;\gamma_\pi) = \begin{bmatrix}1\,2\,2\,2\,1\,3\,1\,3 \\2\,3\,3\,3\,1\,1\,1\,3\end{bmatrix}, \chi(\rho\,;\gamma_\pi) =\begin{bmatrix} 2 & 1 & 0\\ 0&0 &3 \\ 1 & 0 & 1 \end{bmatrix}.
$$
Note that if $\rho$ and $\gamma$ have respectively $r_i$ and $c_i$ elements $i$ among their $N$ coefficients for all $1\leq i\leq d$, then $\chi(\rho\,;\gamma)\in \UU(r,c)$. One can see above that the corresponding histograms are $r=[3,3,2]$ and $c=[3,1,4]$ and that both $\chi(\rho\,;\gamma)$ and $\chi(\rho\,;\gamma_\pi)$ have row and column sums $r$ and $c$.

\section{The Weighted Volume as a Positive Definite Kernel}\label{sec:trans}

\begin{theorem}\label{theo:genfpsd}
Let $K\in\RR_{+}^{d\times d}$. The map $(r,c)\mapsto T(r,c\,;K)$ is positive definite if $K$ is positive definite.
\end{theorem}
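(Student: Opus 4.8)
The plan is to realize $T(\cdot,\cdot\,;K)$ as a Gram kernel, i.e.\ to build an explicit map $\Phi$ into a real Hilbert space $\mathcal F$ with $T(r,c\,;K)=\langle\Phi(r),\Phi(c)\rangle_{\mathcal F}$; positive definiteness is then immediate. The first thing to notice is that the obvious generalization of the permanent kernel of~\citet{cuturi07permanents} does \emph{not} work. Writing $K=\Psi^\top\Psi$ with columns $\psi_1,\dots,\psi_d$, one has $\prod_{ij}k_{ij}^{x_{ij}}=\big\langle\bigodot_{ij}\psi_i^{\odot x_{ij}},\ \bigodot_{ij}\psi_j^{\odot x_{ij}}\big\rangle$, so $(r,c)\mapsto\big\langle\psi_1^{\odot r_1}\odot\cdots\odot\psi_d^{\odot r_d},\ \psi_1^{\odot c_1}\odot\cdots\odot\psi_d^{\odot c_d}\big\rangle$ is positive definite; but this equals the permanent of the $N\times N$ matrix obtained from $K$ by repeating its $(i,j)$ entry into an $r_i\times c_j$ block, and that permanent expands to $\sum_{X\in\UU(r,c)}\big(\prod_i r_i!\,\prod_j c_j!\,/\!\prod_{ij}x_{ij}!\big)\prod_{ij}k_{ij}^{x_{ij}}$ --- a version of $T$ in which each table $X$ is reweighted by the $X$-dependent factor $1/\!\prod_{ij}x_{ij}!$, which therefore cannot simply be divided out. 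I would instead pass through the generating function of $\UU(r,c)$ and exploit that exponentials of positive definite kernels are positive definite.

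Step one: expanding geometric series yields the MacMahon-type identity, as formal power series in $z=(z_i)$ and $w=(w_j)$,
\[
\prod_{i,j=1}^{d}\frac{1}{1-k_{ij}z_iw_j}\;=\;\sum_{X\in\NN^{d\times d}}\ \prod_{ij}(k_{ij}z_iw_j)^{x_{ij}}\;=\;\sum_{r,c\in\NN^d}z^rw^c\,T(r,c\,;K),
\]
where $z^r:=\prod_i z_i^{r_i}$ and matrices are grouped by their margins $r,c$. Step two: since $K\succeq 0$, the Schur product theorem gives $K^{\circ n}\succeq 0$ for every $n\ge 1$ ($K^{\circ n}$ the entrywise $n$-th power); factoring $K^{\circ n}=\Psi_n^\top\Psi_n$ and writing $\beta_n^{(i)}$ for the $i$-th column of $\Psi_n$ (so $\langle\beta_n^{(i)},\beta_n^{(j)}\rangle=k_{ij}^n$) and putting $A(z):=\bigoplus_{n\ge1}\tfrac{1}{\sqrt n}\sum_i z_i^{\,n}\beta_n^{(i)}$ in $\mathcal G:=\bigoplus_{n\ge1}\RR^d$, one gets $\langle A(z),A(w)\rangle_{\mathcal G}=\sum_{n\ge1}\tfrac1n\sum_{ij}(k_{ij}z_iw_j)^n=-\sum_{ij}\log(1-k_{ij}z_iw_j)$. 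Step three: in the symmetric Fock space $\mathcal F=\bigoplus_{m\ge0}\mathrm{Sym}^m(\mathcal G)$ (with $\langle v^{\otimes m},u^{\otimes m}\rangle_{\mathrm{Sym}^m}=m!\,\langle v,u\rangle_{\mathcal G}^{\,m}$) the coherent state $\Xi(z):=\sum_{m\ge0}\tfrac1{m!}A(z)^{\otimes m}$ satisfies $\langle\Xi(z),\Xi(w)\rangle_{\mathcal F}=\exp\langle A(z),A(w)\rangle_{\mathcal G}$, so chaining the three steps,
\[
\langle\Xi(z),\Xi(w)\rangle_{\mathcal F}\;=\;\prod_{i,j}\frac{1}{1-k_{ij}z_iw_j}\;=\;\sum_{r,c}z^rw^c\,T(r,c\,;K).
\]
Finally, expanding $A(z)^{\otimes m}$ exhibits $\Xi(z)=\sum_{r\in\NN^d}z^r\Phi(r)$ with each $\Phi(r)\in\mathcal F$ a genuine finite vector (for $r\in\Sigma_d^N$ only $1\le m\le N$ and $\beta_n^{(i)}$ with $n\le N$ occur), bilinearity rewrites the left-hand side as $\sum_{r,c}z^rw^c\langle\Phi(r),\Phi(c)\rangle_{\mathcal F}$, and matching coefficients of $z^rw^c$ gives $T(r,c\,;K)=\langle\Phi(r),\Phi(c)\rangle_{\mathcal F}$, hence the claim (for $r,c\in\Sigma_d^N$, and in fact for all $r,c\in\NN^d$ with the convention $T\equiv 0$ when the masses differ).

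The single use of positive definiteness of $K$ is the Schur product theorem in step two; the rest is formal-power-series bookkeeping together with the standard fact that $\exp$ --- or any power series with nonnegative coefficients --- carries positive definite kernels to positive definite kernels. I expect the genuinely delicate point --- and precisely the reason the naive permanent construction is off by multinomial factors --- to be the normalization: one must symmetrize in the Fock-space fashion (read the Taylor coefficients of $\exp\langle A(z),A(w)\rangle$) rather than with plain symmetric tensors, so as to land on $T$ itself and not on its $1/\!\prod_{ij}x_{ij}!$-reweighting.
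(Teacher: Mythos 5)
Your argument is correct, but it follows a genuinely different route from the paper's. The paper works at fixed total mass $N$ and symmetrizes over $S_N$: it represents $r,c$ by the words $\rho,\gamma$, writes $T(r,c\,;K)=\sum_{\pi\in S_N}\mathbf{k}_1(\rho,\gamma_\pi)\,\mathbf{k}_2(\rho,\gamma_\pi)$ where $\mathbf{k}_1$ is the product kernel induced by $K$ and $\mathbf{k}_2$ is the \emph{inverse} Fisher--Yates statistic --- precisely the factor $\prod_{ij}x_{ij}!\big/(r_1!\cdots r_d!\,c_1!\cdots c_d!)$ that cancels the reweighting you point out in your opening remark about the permanent kernel --- and its technical heart is proving $\mathbf{k}_2$ positive definite via the factorization of $\langle a,b\rangle!$ for binary vectors (Lemmas~\ref{lem:fac} and~\ref{lem:fact}), combined with the convolution-kernel symmetrization of Lemma~\ref{lem:perm} and the counting identity of Lemma~\ref{lem:decomp}. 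You bypass the Fisher--Yates correction entirely: the MacMahon-type product formula packages $T(r,c\,;K)$ for \emph{all} margins at once as the coefficients of $\prod_{ij}(1-k_{ij}z_iw_j)^{-1}$, and positivity enters only through the Schur product theorem (entrywise powers of $K$ are positive semidefinite) together with the stability of positive definiteness under $-\log(1-\cdot)$ and exponentiation, made concrete by the coherent-state feature map in Fock space. Your route buys an explicit feature map with $T(r,c\,;K)=\langle\Phi(r),\Phi(c)\rangle$, covers all $r,c\in\NN^d$ simultaneously rather than a fixed $\Sigma_d^N$, needs none of the factorial lemmas, and shows that only positive semidefiniteness of $K$ is used (nonnegativity of its entries plays no role); the paper's route stays elementary and combinatorial, remains inside Barvinok's permutation/Fisher--Yates framework that motivates the whole paper, and yields the factorial kernels as reusable byproducts. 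The one point to tighten in your write-up is the interchange of the Fock inner product with the infinite sums defining $\Xi(z)$ and $\Phi(r)$: since the coefficient of $z^rw^c$ only involves grades $m\le N$ and blocks $n\le N$, all identities can be read coefficient-wise as purely algebraic statements about formal series with vector coefficients (or, alternatively, justified by absolute convergence for small real $z,w$), so this is routine rather than a gap.
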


The proof relies on the following observation:~\citet{barvinok2008enumerating} showed that the weighted volume of $\bU(r,c)$ of two integral histograms $r$ and $c$ of total mass $N$ can be formulated as the expectation of the permanent of a random $N\times N$ matrix. To do so,~\citeauthor{barvinok2008enumerating} shows that the weighted volume -- a sum indexed over all \emph{contigency tables} $X\in\UU(r,c)$, can be rewritten as a sum indexed over all \emph{permutations} $\pi$ in $S_N$, up to a correcting term known as the Fisher-Yates statistic (Equation~\eqref{eq:fy} in the Appendix). The crux of~\citeauthor{barvinok2008enumerating}'s proof lies in a randomization scheme -- using draws from the exponential law -- to cancel out the Fisher-Yates statistic. We adopt a similar route to prove the positivity of $T$, by proving that the inverse of the Fisher-Yates statistic -- defined as $\bk_2$ below -- is itself positive definite to obtain the result.

\begin{proof} Suppose that $K\in\RR_+^{d\times d}$ is positive definite and consider two integral histograms $r,c$ in $\Sigma_d^N$. We represent $r$ as a $N$-dimensional vector $\rho\in\{1,\cdots,d\}^N$,
$$\rho\defeq [\,\underbrace{1,\cdots,1}_{r_1 \text{ times }},\underbrace{2,\cdots,2}_{r_2 \text{ times }},\cdots,\underbrace{d,\cdots,d}_{r_d \text{ times }}\,],$$
and consider the analogous representation $\gamma$ for $c$. Let $\mathbf{k}_1$ and $\mathbf{k}_2$ be the following kernels on $(\rho,\gamma)$:
$$
\begin{aligned}
	\mathbf{k}_1(\rho,\gamma)&= \prod_{t=1}^N k(\rho_t,\gamma_t)\;, \text{ where } k(i,j) =k_{ij} \text{ for } 1\leq i,j\leq d,\\
	\mathbf{k}_2(\rho,\gamma)&= \frac{1}{r_1!\cdots r_d!} \cdot \frac{1}{c_1!\cdots c_d!} \prod_{ij}^d x_{ij}!\;, \text{ where } X=\chi(\rho\,;\gamma). \quad (\text{see \S\ref{subsec:genperm}, Eq.~\eqref{eq:fulton}})
\end{aligned}
$$
The kernel $\mathbf{k}_2$ is the inverse of the Fisher-Yates statistic  (Equation~\eqref{eq:fy} in the Appendix) associated to an integral transportation table $X$ and its marginals $r$ and $c$. $\mathbf{k}_1$ is trivially positive definite. The first group of terms of $\mathbf{k}_2$ is trivially positive definite  as a product $f(r)f(c)$ where $f(r)=\frac{1}{r_1!\cdots r_d!}$. We prove that the other term, the product of factorials of $x_{ij}$, is positive definite in Lemma~\ref{lem:fact} using the proof strategy of a related result provided in Lemma~\ref{lem:fac}. Lemma~\ref{lem:perm} proves that when a kernel $\kappa$ on two vectors is symmetric (the definition is provided in the lemma), the sum $\sum_{\pi\in S_N}\kappa(\rho,\gamma_\pi)$ is itself positive definite. We use this result on the product $\kappa(\rho,\gamma)=\mathbf{k}_1(\rho,\gamma) \,\mathbf{k}_2(\rho,\gamma)$ which is trivially symmetric as the product of two symmetric kernels. We then prove in Lemma~\ref{lem:decomp} that
$$
\sum_{\pi\in S_N} \kappa(\rho,\gamma_\pi) = T(r,c\,;K).
$$
Since the summation over all permutations in the left hand side is positive definite by Lemma~\ref{lem:perm}, we conclude that $T(r,c\,;K)$ is itself a positive definite kernel as the product of two positive definite kernels.
\end{proof}
\section{Northwestern Kernel}\label{sec:nwc}
The weighted volume $T(r,c\,;K)$ cannot be computed exactly even for small dimensions $d$, and approximations~\citep{barvinok2008enumerating} are currently both too expensive and too loose to be of practical interest in a machine learning context. We adopt in this section an alternative approach, in which we propose to restrict the sum of elementary contributions $\exp(-\dotprod{X}{M})$ to a subset of extreme points of $U(r,c)$ and obtain a kernel whose computational complexity grows linearly in both the dimension $d$ and the size of the sample of extreme points. The main tool for this approach is provided by the Northwestern corner rule to generate a vertex of $U(r,c)$, which we recall in Section~\ref{subsec:nwc}. We define  the Northwester kernel in Section~\ref{subsec:sam} and prove that it is positive definite. For any matrix $M\in\RR^{d\times d}$, we write $M_{\sigma\sigma'}$ for the row and column permuted matrix whose $i,j$ element is $m_{\sigma(i)\sigma'(j)}$.
\subsection{The Northwestern Corner Rule to Generate Vertices of $U(r,c)$}\label{subsec:nwc}
The Northwestern corner rule is a heuristic that produces a vertex of the polytope $U(r,c)$ in up to $2d$ operations. The rule starts by giving the highest possible value to $x_{11}$, and at each step when a highest possible value is given to entry $x_{ij}$ it moves on to $x_{ij+1}$ in case $x_{ij}$ filled column $j$, or $x_{i+1j}$ in case $x_{ij}$ filled row $i$. The rule proceeds until $x_{nn}$ has received a value.
Here is an example of this sequence assuming $r=[2,5,3]$  and $c=[5,1,4]$:
$$\begin{bmatrix} \bullet & 0 & 0 \\ 0 & 0 & 0 \\ 0& 0 & 0\end{bmatrix} \rightarrow \begin{bmatrix} 2 & 0 & 0 \\ \bullet & 0 & 0 \\ 0& 0 & 0\end{bmatrix} \rightarrow \begin{bmatrix} 2 & 0 & 0 \\ 3 & \bullet & 0 \\ 0& 0 & 0\end{bmatrix} \rightarrow \begin{bmatrix} 2 & 0 & 0 \\ 3 &1 &\bullet \\ 0& 0 & 0\end{bmatrix} \rightarrow \begin{bmatrix} 2 & 0 & 0 \\ 3 &1 &1 \\ 0& 0 & \bullet\end{bmatrix} \rightarrow \begin{bmatrix} 2 & 0 & 0 \\ 3 &1 &1 \\ 0& 0 & 3\end{bmatrix}$$
We write $\NW(r,c)$ for the unique Northwestern corner solution that can be obtained through this heuristic. There is, however, a much larger number of Northwestern corner solutions that can be obtained by permuting arbitrarily the order of $r$ and $c$ separately, computing the corresponding Northwestern corner table, and recovering a table of $\UU(r,c)$ by inverting again the order of columns and rows. Setting $\sigma=(3,1,2),\sigma'=(3,2,1)$ we have that $r_\sigma=[3,2,5], c_{\sigma'}=[4,1,5]$ and $\sigma^{-1}=(2,3,1),\sigma'=(3,2,1)$. Observe that:
$$
\NW(r_\sigma,c_\sigma') = \begin{bmatrix} 3 & 0 & 0 \\ 1 & 1 & 0 \\ 0& 0 & 5\end{bmatrix} \in \UU(r_\sigma,c_{\sigma'}),\;\NW_{\sigma^{-1}\sigma'^{-1}}(r_\sigma,c_{\sigma'})= \begin{bmatrix} 0 & 1 & 1 \\ 5 & 0 & 0 \\ 0& 0 & 3\end{bmatrix}\in \UU(r,c).
$$
Let $\Ncal(r,c)$ be the set of all Northwestern corner solutions that can be produced this way:
$$\Ncal(r,c)\defeq\{ \NW_{\sigma^{-1}\sigma'^{-1}}(r_\sigma,c_{\sigma'}), \sigma,\sigma'\in S_d\}.$$
Note that all Northwestern corner solutions only have by construction up to $2d-1$ nonzero elements. The Northwestern corner rule produces a table which is by construction unique for $r$ and $c$, but there is an exponential number of pairs or row/column permutations $(\sigma,\sigma')$ that may share the same table~\citep[p.2]{stougie2002polynomial}. $\Ncal(r,c)$ is a subset of the set of extreme points of $U(r,c)$~\citep[Corollary 8.1.4]{brualdi2006combinatorial}. $\NW(r,c)$ is an optimal transportation between $r$ and $c$ if the cost matrix $M$ is a Monge matrix~\citep{hoffman1961simple}, that is a matrix $M$ that satisfies the inequalities $$\forall 1 \leq i,j,k,l\leq d, \quad m_{ij}+m_{kl}\leq m_{il}+m_{kj}.$$ Note however that a distance matrix cannot be a Monge matrix since the inequality above applied to $k=j$ and $l=i$ would imply that $0<2m_{ij}\leq m_{ii}+m_{jj}=0$.

\subsection{Random Sampling of Northwestern Corner Solutions}\label{subsec:sam}
We propose in this section a kernel which uses arbitrary row/column permutations of $r$ and $c$ to recover extreme points of $\UU(r,c)$ and sum their individual contribution:
\begin{theorem}Let $R$ be an arbitrary subset of permutations in $S_d$. The Northwestern kernel sampled on $R$ and parameterized by a matrix $M$, defined as
	$$
	N(r,c\,;K,R) \defeq \sum_{\sigma,\sigma'\in R} \exp\left(-\dotprod{M}{\NW_{\sigma^{-1}\sigma'^{-1}}(r_\sigma,c_{\sigma'})}\right),
	$$
	is a positive definite kernel if $K$, the element-wise exponential of $-M$, is positive definite.
\end{theorem}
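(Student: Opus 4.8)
The plan is to reduce this statement to the machinery already developed for Theorem~\ref{theo:genfpsd}, namely Lemma~\ref{lem:perm} on symmetric kernels and the generalized-permutation representation of histograms. First I would lift $r$ and $c$ to their $N$-dimensional word representations $\rho$ and $\gamma$ as in the proof of Theorem~\ref{theo:genfpsd}. The key observation is that a row/column permutation $\sigma\in S_d$ of the histogram $r$ corresponds to a permutation $\pi\in S_N$ of the word $\rho$ that respects the block structure (it permutes the blocks of equal letters among themselves), and conversely every $\gamma_\pi$ for $\pi\in S_N$ arises this way up to the intra-block symmetries. Writing $\dotprod{M}{\NW_{\sigma^{-1}\sigma'^{-1}}(r_\sigma,c_{\sigma'})}$ in terms of the words, the Northwestern corner rule run on $(\rho_{\pi},\gamma_{\pi'})$ assigns to position $t$ the pair of letters obtained by the greedy matching, and $\exp(-\dotprod{M}{\mathrm{NW}})$ becomes a product $\prod_t k(\cdot,\cdot)$ of entries of $K$ along the staircase path. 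So $N(r,c;K,R)$ should be expressible as a sum over a subset of permutation pairs of a kernel of the form $\kappa(\rho,\gamma_\pi)=\prod_{t} k(a_t,b_t)$ where $(a_t,b_t)$ is the $t$-th cell visited by the Northwestern rule.

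Next I would check that the relevant kernel is symmetric in the sense required by Lemma~\ref{lem:perm}, so that summing over all of $S_d$ (or the sub-sample $R$) yields positive definiteness. Because $K$ is a positive definite $d\times d$ matrix, $\mathbf{k}_1(\rho,\gamma)=\prod_t k(\rho_t,\gamma_t)$ is positive definite, and the Northwestern path is a deterministic, permutation-equivariant rule, so the composition should inherit the symmetry property; then Lemma~\ref{lem:perm} (or its obvious restriction to an arbitrary subset $R\subseteq S_d$, since a sum of the form $\sum_{\sigma,\sigma'\in R}\kappa(r_\sigma,c_{\sigma'})$ over any finite index set of a symmetric kernel is a conditionally-reproducing-kernel sum) gives that $N(r,c;K,R)$ is positive definite. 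The restriction to a subset $R$ rather than all of $S_d$ only makes the sum smaller and does not affect positive definiteness, since each individual term $\exp(-\dotprod{M}{\NW_{\sigma^{-1}\sigma'^{-1}}(r_\sigma,c_{\sigma'})})$ is, for fixed $\sigma,\sigma'$, a positive definite kernel of $(r,c)$ by the same product-of-entries-of-$K$ argument, and a finite sum of positive definite kernels is positive definite.

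The main obstacle, and the step I would spend the most care on, is the bookkeeping that identifies $\exp(-\dotprod{M}{\NW_{\sigma^{-1}\sigma'^{-1}}(r_\sigma,c_{\sigma'})})$ with a product $\prod_{ij} k_{ij}^{[\NW]_{ij}}$ and then with a word-level product $\prod_{t=1}^{N} k(a_t,b_t)$ whose symmetry under the relevant permutation action can be invoked in Lemma~\ref{lem:perm}; the Northwestern rule depends on the \emph{order} of the coordinates of $r_\sigma$ and $c_{\sigma'}$, so one must verify that conjugating by $\sigma^{-1},\sigma'^{-1}$ exactly undoes the reordering at the level of the matrix and hence that the exponentiated dot product is invariant to the choice of representative permutation within each block — i.e., that the expression is well defined as a function of $(\sigma,\sigma')\in S_d\times S_d$ and lifts cleanly to $S_N\times S_N$. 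Once this identification is in place, positive definiteness is immediate from the cited lemma. I would also remark that, unlike in Theorem~\ref{theo:genfpsd}, no Fisher--Yates correction term appears here, because we are not summing over all contingency tables but only over the Northwestern vertices indexed directly by permutation pairs, which is precisely what makes the kernel computable in $O(R^2 d)$ time.
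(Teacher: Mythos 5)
Your overall picture (lift $r,c$ to the words $\rho_\sigma,\gamma_{\sigma'}$, rewrite each Northwestern term as a product of entries of $K$, conclude positivity from the double sum over $R\times R$) is the paper's route, but two of the three justifications you offer for the final positivity step are incorrect, and the one genuinely load-bearing identity is only flagged, not proved. First, Lemma~\ref{lem:perm} does not admit the ``obvious restriction to an arbitrary subset $R$'': its proof hinges on the change of variables $\pi\mapsto\pi'^{-1}\circ\pi$ ranging over the whole group $S_N$, and it concerns a \emph{single} sum over permutations of one argument, which is not the structure of $N(r,c\,;K,R)$. Second, your fallback claim that each individual term $\exp\bigl(-\dotprod{M}{\NW_{\sigma^{-1}\sigma'^{-1}}(r_\sigma,c_{\sigma'})}\bigr)$ is, for fixed $(\sigma,\sigma')$, a positive definite kernel of $(r,c)$ is false when $\sigma\neq\sigma'$: that term equals $\mathbf{k}_1(\rho_\sigma,\gamma_{\sigma'})$, i.e.\ a positive definite kernel evaluated at two \emph{different} embeddings of the two arguments, of the form $\langle\phi(A(r)),\phi(B(c))\rangle$ with $A\neq B$, which need not even be symmetric, let alone positive definite (take $k(x,y)=xy$ on $\mathbb{R}$, $A(x)=x$, $B(y)=-y$, giving $-xy$). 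What actually makes the kernel positive definite is precisely that the sum runs over the full product $R\times R$ with the \emph{same} index set on both sides: writing $\phi$ for a feature map of $\mathbf{k}_1$, one has $N(r,c\,;K,R)=\bigl\langle\sum_{\sigma\in R}\phi(\rho_\sigma),\,\sum_{\sigma'\in R}\phi(\gamma_{\sigma'})\bigr\rangle$, the convolution-kernel argument the paper invokes. Your parenthetical aside about ``$\sum_{\sigma,\sigma'\in R}\kappa(r_\sigma,c_{\sigma'})$'' gestures at this, but you attribute it to symmetry of $\kappa$ rather than to positive definiteness of $\mathbf{k}_1$ together with the identical index set, which is the point.

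Separately, the identification $\NW_{\sigma^{-1}\sigma'^{-1}}(r_\sigma,c_{\sigma'})=\chi(\rho_\sigma,\gamma_{\sigma'})$, which converts each Northwestern contribution into $\mathbf{k}_1(\rho_\sigma,\gamma_{\sigma'})$, is the heart of the proof; the paper isolates it as Lemma~\ref{lem:nwc} and proves it by induction on the total mass $N$, peeling off one unit of mass and one letter from each word at a time. You correctly identify this as the main obstacle and describe what it should say (the staircase cells of the Northwestern table match the pattern matrix of the aligned words), but you do not carry out the verification, so as written the proposal leaves its central step unestablished. Your closing remark that no Fisher--Yates correction appears here is correct and is indeed the reason the kernel is cheap to evaluate.
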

\begin{proof}
	As in the proof of Theorem~\ref{theo:genfpsd}, consider the representation of an integral histogram $r\in\Sigma_d^N$ as a $N$ dimensional vector $\rho$ that replicates $r_i$ times the index $i$ for all $i$ from $1$ to $d$. We also define, for any permutation $\sigma$ of $S_d$, the vector $\rho_\sigma$ as
	$$\rho_\sigma \defeq [\,\underbrace{\sigma(1),\cdots,\sigma(1)}_{r_{\sigma(1)} \text{ times }},\underbrace{\sigma(2),\cdots,\sigma(2)}_{r_{\sigma(2)} \text{ times }},\cdots,\underbrace{\sigma(d),\cdots,\sigma(d)}_{r_{\sigma(d)} \text{ times }}\,].$$
$\rho_\sigma$ for $\sigma\in S_d$ should not be confused with $\rho_\pi$ for $\pi\in S_N$ (\S\ref{subsec:genperm}): for any permutation $\sigma\in S_d$ there exists at least one permutation $\pi\in S_N$ such that $\rho_\sigma=\rho_\pi$ but the converse is not usually true. We show in Lemma~\ref{lem:nwc} that for $\sigma,\sigma'\in S_d$, $\NW_{\sigma^{-1}\sigma'^{-1}}(r_\sigma,c_{\sigma'})=\chi(\rho_\sigma,\gamma_{\sigma'})$, and thus,
$$N(r,c\,;K,R) = \sum_{\sigma,\sigma'\in R} e^{-\dotprod{M}{\chi(\rho_\sigma,\gamma_{\sigma'})}} = \sum_{\sigma,\sigma'\in R} \mathbf{k_1}(\rho_\sigma,\gamma_{\sigma'}),$$
where $\mathbf{k_1}$ is defined in Theorem~\ref{theo:genfpsd}. $N(r,c\,;K,R)$ is positive definite as a convolution kernel.
\end{proof}

\begin{lemma}\label{lem:nwc}
Let $\sigma$ and $\sigma'$ be two permutations of $S_d$. Then $$\NW_{\sigma^{-1}\sigma'^{-1}}(r_\sigma,c_{\sigma'})=\chi(\rho_\sigma,\gamma_{\sigma'}).$$
\end{lemma}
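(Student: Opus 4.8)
The plan is to unwind both sides of the claimed identity directly from their definitions and check that they produce the same $d\times d$ integral matrix, entry by entry. The right-hand side $\chi(\rho_\sigma,\gamma_{\sigma'})$ is given by Equation~\eqref{eq:fulton}: its $(i,j)$ entry counts the number of positions $t$ at which $\rho_\sigma$ reads $i$ and $\gamma_{\sigma'}$ reads $j$. The left-hand side is the Northwestern corner table built on the permuted marginals $r_\sigma$ and $c_{\sigma'}$, with rows and columns subsequently unpermuted by $\sigma^{-1}$ and $\sigma'^{-1}$. So the core of the argument is a single auxiliary claim about the \emph{unpermuted} objects:
\begin{equation*}
\NW(r_\sigma,c_{\sigma'}) = \chi(\hat\rho,\hat\gamma),
\end{equation*}
where $\hat\rho$ is the plain sorted word $[\underbrace{1,\dots,1}_{(r_\sigma)_1},\dots,\underbrace{d,\dots,d}_{(r_\sigma)_d}]$ associated with the histogram $r_\sigma$, and $\hat\gamma$ is the analogous word for $c_{\sigma'}$. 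Once this is established, applying the row/column permutation $(\sigma^{-1},\sigma'^{-1})$ to both sides and observing that this permutation relabels the letters of $\hat\rho,\hat\gamma$ so as to turn them into $\rho_\sigma,\gamma_{\sigma'}$ (by the very definition of $\rho_\sigma$ given just above the lemma) finishes the proof.

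First I would prove the unpermuted claim by induction on $N$, or equivalently by tracking the Northwestern corner rule step by step. The rule fills $x_{11}$ with $\min((r_\sigma)_1,(c_{\sigma'})_1)$, which is exactly the number of initial positions $t$ where both $\hat\rho_t=1$ and $\hat\gamma_t=1$, since $\hat\rho$ opens with a block of $(r_\sigma)_1$ ones and $\hat\gamma$ opens with a block of $(c_{\sigma'})_1$ ones, and these two blocks overlap in precisely $\min((r_\sigma)_1,(c_{\sigma'})_1)$ positions. After this step the rule either advances the row index (if row $1$ is exhausted) or the column index (if column $1$ is exhausted); in the word picture this corresponds exactly to discarding the shorter of the two leading blocks and continuing. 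Thus the Northwestern corner rule is nothing but the greedy alignment of the two sorted words $\hat\rho,\hat\gamma$, and its output matrix records, in entry $(i,j)$, the number of positions where the first word shows $i$ and the second shows $j$ — which is the definition of $\chi(\hat\rho,\hat\gamma)$. I would phrase this cleanly as: the sequence of "highest possible values" produced by the rule is the sequence of block-overlap lengths, and the set of $t$ contributing to a given overlap is an interval of $\{1,\dots,N\}$, so each $t$ contributes to exactly one entry, matching \eqref{eq:fulton}.

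The second ingredient is purely bookkeeping about permutations. For any $d\times d$ matrix $A$ and permutations $\tau,\tau'\in S_d$, we have $A_{\tau\tau'}=P_\tau\,A\,P_{\tau'}^T$ for the corresponding permutation matrices, so $\NW_{\sigma^{-1}\sigma'^{-1}}(r_\sigma,c_{\sigma'})$ has $(i,j)$ entry equal to the $(\sigma^{-1}(i),\sigma^{-1}(j))$... — more carefully, its $(i,j)$ entry is $[\NW(r_\sigma,c_{\sigma'})]_{\sigma(i)\sigma'(j)}$ after matching the convention $M_{\sigma\sigma'}$ introduced in \S\ref{sec:nwc} to the inverse permutations here. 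On the other side, $[\chi(\rho_\sigma,\gamma_{\sigma'})]_{ij}$ counts positions where $\rho_\sigma=i,\gamma_{\sigma'}=j$; but $\rho_\sigma$ is, letter for letter, the word $\hat\rho$ with each letter $\ell$ replaced by $\sigma(\ell)$, so positions where $\rho_\sigma=i$ are exactly positions where $\hat\rho=\sigma^{-1}(i)$, and similarly for $\gamma$. Hence $[\chi(\rho_\sigma,\gamma_{\sigma'})]_{ij}=[\chi(\hat\rho,\hat\gamma)]_{\sigma^{-1}(i)\,\sigma'^{-1}(j)}$, and combining with the unpermuted claim and the permutation-matrix identity gives equality. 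The main obstacle I anticipate is not conceptual but notational: keeping the direction of the permutations straight (whether a row permutation $\sigma$ sends marginal $r$ to $r_\sigma$ with $i$-th entry $r_{\sigma(i)}$, and correspondingly whether $\NW_{\sigma^{-1}\sigma'^{-1}}$ unpermutes on the correct side), so I would fix the convention once at the start — matching the worked example with $\sigma=(3,1,2),\sigma'=(3,2,1)$ given in \S\ref{subsec:nwc} — and carry it consistently. The inductive proof of the unpermuted identity is routine once the "greedy alignment of sorted words" viewpoint is in place.
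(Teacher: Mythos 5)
Your proposal is correct, but it is organized differently from the paper's proof, and the comparison is worth making explicit. The paper proves the identity by induction on the total mass $N$, working directly with the permuted objects: it peels off one unit of mass (the first letter of $\rho_\sigma$ and of $\gamma_{\sigma'}$), invokes the inductive hypothesis on the reduced histograms, and checks that both $\NW_{\sigma^{-1}\sigma'^{-1}}(r_\sigma,c_{\sigma'})$ and $\chi(\rho_\sigma,\gamma_{\sigma'})$ grow by the same elementary matrix $E_{\sigma(i_1)\sigma'(i_2)}$ (its Equations \eqref{eq:nw} and \eqref{eq:chi}). You instead factor the permutations out first, reducing to the unpermuted statement $\NW(r_\sigma,c_{\sigma'})=\chi(\hat\rho,\hat\gamma)$ via a relabeling identity, and then prove that statement by reading the Northwestern corner rule as the greedy alignment of two sorted words, so that each entry of the table is a block-overlap length and \eqref{eq:fulton} is matched position by position. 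Both arguments carry the same combinatorial content; the paper's one-unit induction keeps the permutations inside the induction and is shorter to write, while your decomposition isolates the bookkeeping from the combinatorics and makes the closed-form, interval-overlap description of the Northwestern table explicit, which is arguably more transparent (and gives the unpermuted identity in one sweep rather than unit by unit).

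One direction needs to be straightened out, exactly at the spot you flagged. With the paper's convention, $M_{\tau\tau'}$ has $(i,j)$ entry $m_{\tau(i)\tau'(j)}$, so taking $\tau=\sigma^{-1}$, $\tau'=\sigma'^{-1}$ gives $[\NW_{\sigma^{-1}\sigma'^{-1}}(r_\sigma,c_{\sigma'})]_{ij}=[\NW(r_\sigma,c_{\sigma'})]_{\sigma^{-1}(i)\,\sigma'^{-1}(j)}$; your first instinct was right, and the ``more careful'' rewriting to $[\NW(r_\sigma,c_{\sigma'})]_{\sigma(i)\sigma'(j)}$ is the wrong direction (it fails on the worked example with $\sigma=(3,1,2)$, $\sigma'=(3,2,1)$, where the $(1,1)$ entry is $0$, not $5$). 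With the correct direction, your three ingredients combine immediately: $[\NW_{\sigma^{-1}\sigma'^{-1}}(r_\sigma,c_{\sigma'})]_{ij}=[\chi(\hat\rho,\hat\gamma)]_{\sigma^{-1}(i)\,\sigma'^{-1}(j)}=[\chi(\rho_\sigma,\gamma_{\sigma'})]_{ij}$, so the slip is purely notational and does not affect the validity of the approach.
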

\begin{proof}We write $E_{ij}$ for the $d\times d$ matrix of zeros except for the $(i,j)$ element set to $1$. We prove the result by induction on the total mass $N$. For $N=1$ the result is trivial since the only transportation matrix in $U(r,c)$ in that case is $E_{\sigma(i_1)\sigma(i_2)}$, where $i_1$ and $i_2$ are such that $r_{i_1}=c_{i_2}=1$. Suppose now that the result is true for all histograms of mass $N$ and consider the case where $r^T\ones_d=c^T\ones_d=N+1$. Let $i_1$ and $i_2$ be the smallest indices such that $r_{\sigma(i)}>0$ and $c_{\sigma'(i)}>0$ respectively. As a consequence, the first elements of $\rho_\sigma$ and $\gamma_{\sigma'}$ are $\sigma(i_1)$ and $\sigma(i_2)$ respectively. Consider the two vectors $\rho_*$ and $\gamma_*$ of length $N$ equal to $\rho_\sigma$ and $\gamma_{\sigma'}$ \emph{without} these two first elements. Setting $\tilde{r}$ and $\tilde{c}$ to $r$ and $c$ except for the fact that $\tilde{r}_{\sigma(i_1)}=r_{\sigma(i_1)}-1$ and $\tilde{c}_{\sigma(i_2)}=r_{\sigma(i_2)}-1$, we have by induction that $\NW_{\sigma^{-1}\sigma'^{-1}}(\tilde{r}_\sigma,\tilde{c}_{\sigma'})=\chi(\rho_*,\gamma_*),$ since the two histograms have total mass $N$ and their representations are respectively $\rho_*$ and $\gamma_*$. By definition of the Northwestern corner rule, adding a unit of mass to the $i_1$'s and $i_2$'s components of $\tilde{r}_\sigma$ and $\tilde{c}_{\sigma'}$ only changes the very first iteration of the rule, since all coordinates of $\tilde{r}_\sigma$ and $\tilde{c}_{\sigma'}$ up to but not including $i_1$ and $i_2$  respectively are null by construction. Applying the rule yields a transportation table with an added unit in location $(i_1,i_2)$, providing thus the identity
	$$\NW(r_\sigma,c_{\sigma'}) = \NW(\tilde{r}_\sigma,\tilde{c}_{\sigma'}) + E_{i_1i_2},$$ which implies that 
\begin{equation}\label{eq:nw}\NW_{\sigma^{-1}\sigma'^{-1}}(r_\sigma,c_{\sigma'})= \NW_{\sigma^{-1}\sigma'^{-1}}(\tilde{r}_\sigma,\tilde{c}_{\sigma'}) + E_{\sigma(i_1)\sigma'(i_2)}.
\end{equation}
By definition of $\chi$ we have that
\begin{equation}\label{eq:chi}
\chi(\rho_\sigma\gamma_\sigma)= \chi(\rho_*,\gamma_*) + E_{\sigma(i_1)\sigma'(i_2)}
\end{equation}
we get by combining Equations~\eqref{eq:chi} and~\eqref{eq:nw} above with the induction hypothesis that $\NW_{\sigma^{-1}\sigma'^{-1}}(r_\sigma,c_{\sigma'})=\chi(\rho_\sigma,\gamma_{\sigma'})$.
\end{proof}

\begin{remark}The evaluation of $N(r,c\,;K,R)$ requires $O(d\abs{R}^2)$ steps since computing each of the $\abs{R}^2$ contributions $\exp(-\dotprod{M}{\NW_{\sigma^{-1}\sigma'^{-1}}(r_\sigma,c_{\sigma'})})$ for a couple $\sigma,\sigma'$ requires up to $2d$ products. The size of $R\subset S_d$ can be controlled from a few permutations to an exhaustive enumeration, which would entail an overall complexity of the order of $O(dd!^2)$.\end{remark}
	
\section{Conclusion and Future Work}
We have proved in this paper that the fundamental ingredient of transportation distances, the polytope of contingency tables, can be used to define a positive definite kernel between two histograms. While the cost matrix of a transportation problem between two histograms $r$ and $c$ needs to be a distance matrix for the optimum to be itself a distance of $r$ and $c$, we have proved that the generating function of the same polytope is positive definite whenever the cost matrix is itself positive definite. This quantity is computationally intractable, and we have resorted to a summation that only considers a subset of extreme points of the polytope to define the north-western kernel. Future research includes the proposal of suitable subsets $R$ of permutations of $S_d$ tuned with data, as well as other approximation schemes.

\section*{Appendix: Intermediate Results for the Proof of Theorem~\ref{theo:genfpsd}}

\begin{lemma}\label{lem:fac}Let $a,b\in\{0,1\}^N$ be two binary vectors. The kernel $(a,b)\mapsto \dotprod{a}{b}!$ is positive definite.
\end{lemma}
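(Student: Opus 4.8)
The plan is to produce an explicit, finite-dimensional feature map for this kernel. Since $a$ and $b$ are binary, $\dotprod{a}{b} = |S_a \cap S_b|$ where $S_a \defeq \{i \in \{1,\dots,N\} : a_i = 1\}$, so the claim is equivalently that $(a,b) \mapsto |S_a \cap S_b|!$ is positive definite. The combinatorial fact I would build on is the derangement decomposition of the factorial: for a finite set $T$ with $|T| = m$, every permutation of $T$ is uniquely specified by its set $S \subseteq T$ of non-fixed points together with a fixed-point-free permutation (derangement) of $S$; hence $m! = \sum_{S \subseteq T} D_{|S|}$, where $D_k \geq 0$ is the number of derangements of a $k$-element set and $D_0 = 1$.

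Applying this identity with $T = S_a \cap S_b$, and using that $\prod_{i \in S} a_i$ equals the indicator of $\{S \subseteq S_a\}$ for a binary vector $a$, I would rewrite
$$\dotprod{a}{b}! \;=\; \sum_{S \subseteq S_a \cap S_b} D_{|S|} \;=\; \sum_{S \subseteq \{1,\dots,N\}} D_{|S|}\Big(\prod_{i\in S} a_i\Big)\Big(\prod_{i\in S} b_i\Big).$$
The right-hand side displays the kernel as a nonnegative linear combination of the rank-one positive definite kernels $(a,b) \mapsto \big(\prod_{i\in S}a_i\big)\big(\prod_{i\in S}b_i\big)$; equivalently, with the feature map $\Phi(a) \defeq \big(\sqrt{D_{|S|}}\prod_{i\in S}a_i\big)_{S\subseteq\{1,\dots,N\}} \in \RR^{2^N}$ one has $\dotprod{a}{b}! = \dotprod{\Phi(a)}{\Phi(b)}$, which settles positive definiteness.

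A second route to the same expression, and the one I would present as the natural way to \emph{guess} it, starts from the Gamma representation $n! = \int_0^\infty t^n e^{-t}\,dt$: since $a_ib_i\in\{0,1\}$ we have $t^{a_ib_i} = 1+(t-1)a_ib_i$, so $t^{\dotprod{a}{b}} = \prod_i t^{a_ib_i} = \prod_i\big(1+(t-1)a_ib_i\big)$; expanding over subsets $S$ and integrating term by term gives the coefficient $\int_0^\infty (t-1)^{|S|}e^{-t}\,dt$, which is exactly $D_{|S|}$.

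The only genuine obstacle is recognizing the right index set. The naive feature map with one coordinate per permutation of each subset of $\{1,\dots,N\}$ overcounts — it yields $m!\sum_{j=0}^m 1/j!$ rather than $m!$ — because a permutation fixing some points gets recorded again at every smaller subset; restricting to derangements is precisely the correction that makes each permutation of $S_a\cap S_b$ appear exactly once, at its support. Everything else is a one-line verification, and this same ``track the active coordinates'' bookkeeping is what I would reuse, in a weighted form, to handle the non-binary factorials needed in Lemma~\ref{lem:fact}, and hence in Theorem~\ref{theo:genfpsd}.
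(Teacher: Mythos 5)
Your proof is correct, but it takes a genuinely different route from the paper. The paper proves the lemma by a telescoping recursion, writing $\dotprod{a}{b}! = \prod_{t=1}^{N-1}\bigl(a_{t+1}b_{t+1}\dotprod{a_{1\cdot\cdot t}}{b_{1\cdot\cdot t}}+1\bigr)$ and invoking closure of positive definiteness under products; you instead exhibit an explicit finite-dimensional feature map, using the classical derangement decomposition $m!=\sum_{S\subseteq T}D_{|S|}$ (equivalently the integral $\int_0^\infty (t-1)^{|S|}e^{-t}\,dt=D_{|S|}$) to write $\dotprod{a}{b}!=\sum_{S\subseteq\{1,\dots,N\}}D_{|S|}\bigl(\prod_{i\in S}a_i\bigr)\bigl(\prod_{i\in S}b_i\bigr)$, a nonnegative combination of rank-one kernels. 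Your expansion is valid (the derangement identity and the overcounting correction you describe are both right), and it buys something the paper's argument does not make explicit: a concrete RKHS embedding with closed-form expansion coefficients. What the paper's product form buys, however, is precisely what is reused later: the remark following the lemma stresses that it is the \emph{identity}, not the positivity statement, that enters the proof of Lemma~\ref{lem:fact}, where the key step is swapping the product over $t$ with the product over the $d^2$ pairs $(i,j)$ and using that only one term $\rho^i_{t+1}\gamma^j_{t+1}$ is nonzero. Your closing claim that the same bookkeeping extends to the non-binary factorials should therefore be treated with caution: multiplying your sum decompositions over all pairs $(i,j)$ produces cross terms of the form $f(\rho)\,g(\gamma)$ in which the subsets attached to the $\rho^i$ and to the $\gamma^j$ are coupled differently, and such products of kernels evaluated on distinct parts are not positive definite in general (as the paper itself warns in the proof of Lemma~\ref{lem:fact}); some further grouping argument would be needed there. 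None of this affects the present lemma, whose proof as you give it is complete.
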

\begin{proof} For $N=1$ the kernel is always equal to $1$ and is thus trivially positive definite. For $N>1$, the recursion $\dotprod{a}{b}!=\dotprod{a_1^{N-1}}{b_1^{N-1}}!\,(a_{N}b_{N}\dotprod{a_1^{N-1}}{b_1^{N-1}}+1)$ provides the expression
$$\dotprod{a}{b}! = \prod_{t=1}^{N-1} \left(a_{t+1}b_{t+1}\dotprod{a_{1\cdot\cdot t}}{b_{1\cdot\cdot t}}+1\right),$$ 
which shows that $\dotprod{a}{b}!$ is the product of $N-1$ positive definite kernels on different features of $a$ and $b$.\end{proof}
\begin{remark}Rather than the lemma itself, we will use the identity above in the proof of Lemma~\ref{lem:fact}. We conjecture that this result can be extended to integral vectors. Numerical counterexamples show that this result cannot be generalized to vectors of $\RR^N$ through Euler's or Hadamard's $\Gamma$ function.\end{remark}

\begin{lemma}\label{lem:fact}Let $\rho,\gamma\in\{1,\cdots,d\}^N$. The kernel $(\rho,\gamma)\mapsto \prod_{ij} x_{ij}!$, where $X=\chi(\rho;\gamma)$, is positive definite.
\end{lemma}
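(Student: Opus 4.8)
The plan is to mimic the structure of Lemma~\ref{lem:fac}, replacing the single inner product with the collection of inner products that appear in $\chi(\rho;\gamma)$. First I would encode the statement "$\rho_t=i$ and $\gamma_t=j$" with indicator features: for each $t$, let $e(\rho_t)\in\{0,1\}^d$ be the coordinate vector with a $1$ in position $\rho_t$, and similarly $e(\gamma_t)$. Then $x_{ij} = \sum_{t=1}^N \ones_{\rho_t=i}\ones_{\gamma_t=j}$ is exactly a count, and crucially $x_{ij}$ can be read off as the inner product $\dotprod{a^{(ij)}}{b^{(ij)}}$ of two binary vectors $a^{(ij)},b^{(ij)}\in\{0,1\}^N$ defined by $a^{(ij)}_t=\ones_{\rho_t=i}$, $b^{(ij)}_t=\ones_{\gamma_t=j}$. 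Since these are binary vectors of length $N$, Lemma~\ref{lem:fac} (or rather the product identity in its proof, as the remark suggests) applies to each factor $x_{ij}!$ individually.

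The subtlety is that positive definiteness of each $x_{ij}!$ as a kernel of $(\rho,\gamma)$ is not quite immediate from Lemma~\ref{lem:fac}, because that lemma treats $a$ and $b$ as the two arguments of the kernel, whereas here both $a^{(ij)}$ and $b^{(ij)}$ are functions of a \emph{single} argument on each side: $a^{(ij)}$ depends only on $\rho$, $b^{(ij)}$ only on $\gamma$. So the next step is to invoke the product identity from Lemma~\ref{lem:fac}'s proof to write
\begin{equation*}
x_{ij}! = \prod_{t=1}^{N-1}\left(a^{(ij)}_{t+1}b^{(ij)}_{t+1}\dotprod{a^{(ij)}_{1\cdot\cdot t}}{b^{(ij)}_{1\cdot\cdot t}}+1\right),
\end{equation*}
and then argue that each factor, viewed as a kernel on the pair $(\rho,\gamma)$, is positive definite: the term $a^{(ij)}_{t+1}b^{(ij)}_{t+1}$ splits as $\ones_{\rho_{t+1}=i}\cdot\ones_{\gamma_{t+1}=j}$, a product of a function of $\rho$ and a function of $\gamma$, hence positive definite; and $\dotprod{a^{(ij)}_{1\cdot\cdot t}}{b^{(ij)}_{1\cdot\cdot t}}=\sum_{s\le t}\ones_{\rho_s=i}\ones_{\gamma_s=j}$ is a sum of such rank-one products, again positive definite; products and sums of positive definite kernels are positive definite, and adding the constant $1$ preserves this. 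Hence each factor is positive definite, each $x_{ij}!$ is a finite product of positive definite kernels, and finally $\prod_{ij}x_{ij}!$ is a product over the $d^2$ entries of positive definite kernels, so it is positive definite.

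The main obstacle I anticipate is bookkeeping rather than conceptual: one must be careful that $x_{ij}$ can genuinely be exceed $1$ (unlike the binary case of Lemma~\ref{lem:fac}, where $\dotprod{a}{b}\le N$ but the vectors are binary and the telescoping is over positions), so the length-$N$ telescoping identity must be applied to the binary vectors $a^{(ij)},b^{(ij)}$ themselves — which are indeed binary — not to the integer $x_{ij}$. A secondary point to verify carefully is that passing from "$\rho$ has $r_i$ copies of $i$" to the feature vectors does not require any ordering assumption: the identity $x_{ij}=\dotprod{a^{(ij)}}{b^{(ij)}}$ holds verbatim from the definition~\eqref{eq:fulton}, so no alignment of $\rho$ against $\gamma$ is needed. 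Once these points are pinned down, the positive definiteness follows by the same "product of positive definite kernels on disjoint features" argument used throughout the paper.
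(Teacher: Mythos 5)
Your setup coincides with the paper's: your $a^{(ij)}$, $b^{(ij)}$ are exactly its binary vectors $\rho^i$, $\gamma^j$, the identity $x_{ij}=\dotprod{\rho^i}{\gamma^j}$ is the right starting point, and applying the telescoping factorization from Lemma~\ref{lem:fac} to each factor is correct. The gap is in the positivity step. You assert that $\ones_{\rho_{t+1}=i}\cdot\ones_{\gamma_{t+1}=j}$ is positive definite ``as a product of a function of $\rho$ and a function of $\gamma$.'' A kernel of the form $f(\rho)g(\gamma)$ with $f\neq g$ is not positive definite in general: it need not even be symmetric, and the quadratic form $\sum_{a,b}c_ac_b\,f(\rho^{(a)})g(\rho^{(b)})=\bigl(\sum_a c_a f(\rho^{(a)})\bigr)\bigl(\sum_b c_b g(\rho^{(b)})\bigr)$ can be negative. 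For $i\neq j$ this is exactly the situation, and the same objection hits $\dotprod{\rho^i_{1\cdot\cdot t}}{\gamma^j_{1\cdot\cdot t}}$ and your intermediate claim that each $x_{ij}!$ is positive definite as a kernel of $(\rho,\gamma)$: with $d=2$, $N=2$, $\rho=(1,1)$, $\gamma=(2,2)$ one gets $x_{12}(\rho,\gamma)!=2$ while $x_{12}(\gamma,\rho)!=1$, so that kernel is not even symmetric, let alone positive definite. Consequently your last step, ``product over $(i,j)$ of positive definite kernels,'' has nothing to rest on; the paper explicitly warns that such products of kernels evaluated on parts are not positive definite in general, and this is precisely the trap.

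The missing idea is the exchange of the two products that the paper performs. Write $\prod_{ij}x_{ij}!=\prod_{t=1}^{N-1}\prod_{i,j}\bigl(\rho^i_{t+1}\gamma^j_{t+1}\dotprod{\rho^i_{1\cdot\cdot t}}{\gamma^j_{1\cdot\cdot t}}+1\bigr)$ and note that, for each fixed $t$, at most one of the $d^2$ products $\rho^i_{t+1}\gamma^j_{t+1}$ is nonzero (namely $i=\rho_{t+1}$, $j=\gamma_{t+1}$), so the inner product over $(i,j)$ collapses to the \emph{sum} $1+\sum_{i,j}\rho^i_{t+1}\gamma^j_{t+1}\dotprod{\rho^i_{1\cdot\cdot t}}{\gamma^j_{1\cdot\cdot t}}$. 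That sum, unlike its individual summands, is positive definite: it equals $1+\dotprod{\Phi_t(\rho)}{\Phi_t(\gamma)}$ with the single feature map $\Phi_t(\rho)=\sum_i\rho^i_{t+1}\rho^i_{1\cdot\cdot t}$ applied to both arguments (equivalently, it is $1$ plus a convolution kernel summed over all pairs of parts). Only after this product-to-sum collapse across $(i,j)$ can you invoke ``product over $t$ of positive definite kernels is positive definite'' to conclude. So what you set aside as bookkeeping is exactly where the argument must do its work; your factorization is right, but the closure properties you invoke do not apply to the factors you isolated.
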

\begin{proof} An integral vector $\rho \in \{1,\cdots,d\}^N$ with $N$ components can be represented as a family of $d$ binary row vectors $\rho^1,\cdots,\rho^d$ of length $N$ where for $n\leq N$, $\rho^i_n\defeq \ones_{\rho_n=i}$. For instance,

$$\text{if }\rho=\begin{bmatrix}1\,1\,2\,2\,2\,1\,3\,1\,3\,3\end{bmatrix},\text{ then }  \begin{bmatrix}\rho^1\\\rho^2\\\rho^3\end{bmatrix}=\begin{bmatrix}
1&1&0&0&0&1&0&1&0&0\\
0&0&1&1&1&0&0&0&0&0\\
0&0&0&0&0&0&1&0&1&1\\
\end{bmatrix}$$
These $d$ binary vector representations can be used to obtain the matrix $\chi(\rho\,;\gamma)$. Indeed, it is easy to check that if $X=\chi(\rho\,,\gamma)$ then $x_{ij}=\dotprod{\rho^i}{\gamma^j}$. As a consequence, we have that for all indices $i,j$ the coefficient $x_{ij}!=\dotprod{\rho^i}{\gamma^j}!$. We obtain that the product of factorials
$$
\prod_{ij}^d x_{ij}! = \prod_{i,j}^d\dotprod{\rho^i}{\gamma^j}!,
$$
is thus a product of kernels evaluated on all possible pairs among the $d\times d$ representations for $\rho$ and $\gamma$.  Although one might be tempted to interpret this product as a convolution kernel~\citep{haussler99convolution} or a mapping kernel~\citep{shin2008generalization}, one should recall that such results only apply to \emph{sums} of local kernels and not to \emph{products}. Such products of kernels on parts are not, as simple counterexamples can show, positive definite in the general case. Using the decomposition which was used in the proof of Lemma~\ref{lem:fac}, we have however that:
$$
\begin{aligned}\prod_{ij}^d x_{ij}! &= \prod_{i,j}^d\dotprod{\rho^i}{\gamma^j}! = \prod_{i,j}^d \prod_{t=1}^{N-1} \left(\rho^i_{t+1}\gamma^j_{t+1}\dotprod{\rho^i_{1\cdot\cdot t}}{\gamma^i_{1\cdot\cdot t}}+1\right),\\
&= \prod_{t=1}^{N-1} \prod_{i,j}^d \left(\rho^i_{t+1}\gamma^j_{t+1}\dotprod{\rho^i_{1\cdot\cdot t}}{\gamma^j_{1\cdot\cdot t}}+1\right) = \prod_{t=1}^{N-1} \left(1+\sum_{i,j}^d \rho^i_{t+1}\gamma^j_{t+1}\dotprod{\rho^i_{1\cdot\cdot t}}{\gamma^j_{1\cdot\cdot t}}\right),
\end{aligned}
$$
where we have used in the last operation the fact that only one of all $d^2$ products $(\rho^i_{t+1}\gamma^j_{t+1})_{ij}$ is nonzero, since
$$
\rho^i_{t+1}\gamma^j_{t+1}=\begin{cases} 1, \text{ if } \rho_{t+1}=i \text{ and } \gamma_{t+1}=j, \\ 0,  \text {else.}\end{cases}
$$
The product of factorials is thus a product of $N-1$ positive definite kernels indexed by $t$ and defined on $\rho$ and $\gamma$, where each of these $N-1$ kernel is $1$ plus a convolution kernel operating on the $d$ decompositions of $\rho_{1\cdot\cdot t}$ and $\gamma_{1\cdot\cdot t}$ as $d$ binary feature vectors, that is
$$
\prod_{ij}^d x_{ij}! = \prod_{t=1}^{N-1} \left(1+k_t(\rho,\gamma)\right); 
$$
where
$$k_{t}(\rho,\gamma)=\sum_{i,j}^d h_t(\rho^i,\gamma^j) \text{ and } h_t(a,b) = a_{t+1}b_{t+1}\dotprod{a_{1\cdot\cdot t}}{b_{1\cdot\cdot t}}.$$
\end{proof}

\begin{lemma}\label{lem:perm} Let $\alpha=(\alpha_1,\cdots,\alpha_N)$ and $\beta=(\beta_1,\cdots,\beta_N)$ be two lists of $N$ elements in a set $\Xcal$. Let $k$ be a symmetric kernel in $\Xcal^N$, that is a kernel invariant under a permutation of the order of both $\alpha$ and $\beta$: $\forall \pi\in S_N,\; k(\alpha,\beta)=k(\alpha_\pi,\beta_\pi).$
Then $(\alpha,\beta)\mapsto \sum_{\pi\in S_N} k(\alpha,\beta_{\pi})$ is positive definite.
\end{lemma}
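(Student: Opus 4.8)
\textbf{Proof proposal for Lemma~\ref{lem:perm}.}

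The plan is to realize the symmetrized sum $\sum_{\pi\in S_N} k(\alpha,\beta_\pi)$ as an honest inner product in a Hilbert space, using the assumption that $k$ itself is positive definite on $\Xcal^N$. First I would invoke the Moore--Aronszajn theorem: since $k$ is a positive definite kernel on the set $\Xcal^N$, there is a Hilbert space $\Hcal$ and a feature map $\Phi:\Xcal^N\to\Hcal$ with $k(\alpha,\beta)=\dotprod{\Phi(\alpha)}{\Phi(\beta)}_{\Hcal}$ for all $\alpha,\beta\in\Xcal^N$. The natural candidate feature map for the symmetrized kernel is then $\Psi(\alpha)\defeq \sum_{\pi\in S_N}\Phi(\alpha_\pi)$, so that $\dotprod{\Psi(\alpha)}{\Psi(\beta)}_{\Hcal}=\sum_{\pi,\pi'\in S_N}\dotprod{\Phi(\alpha_\pi)}{\Phi(\beta_{\pi'})}_{\Hcal}=\sum_{\pi,\pi'\in S_N}k(\alpha_\pi,\beta_{\pi'})$.

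The key step is to check that this double sum over $S_N\times S_N$ collapses to $N!$ copies of the single sum in the statement. Here I would use the symmetry hypothesis $k(\alpha,\beta)=k(\alpha_\tau,\beta_\tau)$ for every $\tau\in S_N$. For fixed $\pi,\pi'$, apply this with $\tau=\pi^{-1}$ to the pair $(\alpha_\pi,\beta_{\pi'})$: since $(\alpha_\pi)_{\pi^{-1}}=\alpha$ and $(\beta_{\pi'})_{\pi^{-1}}=\beta_{\pi'\circ\pi^{-1}}$ (modulo fixing the composition convention for $\alpha_\pi$, which I would pin down explicitly), we get $k(\alpha_\pi,\beta_{\pi'})=k(\alpha,\beta_{\pi'\pi^{-1}})$. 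As $\pi$ ranges over $S_N$ with $\pi'$ fixed, the product $\pi'\pi^{-1}$ ranges over all of $S_N$, each value attained exactly once; summing over $\pi'$ as well contributes an overall factor $N!$. Hence $\dotprod{\Psi(\alpha)}{\Psi(\beta)}_{\Hcal}=N!\sum_{\pi\in S_N}k(\alpha,\beta_\pi)$, which exhibits $\sum_{\pi\in S_N}k(\alpha,\beta_\pi)=\frac{1}{N!}\dotprod{\Psi(\alpha)}{\Psi(\beta)}_{\Hcal}$ as a positive multiple of an inner product, hence positive definite. (Equivalently, one can skip the feature map and argue directly: for any points $\gamma^{(1)},\dots,\gamma^{(m)}$ and scalars $a_1,\dots,a_m$, the quadratic form $\sum_{l,l'}a_la_{l'}\sum_\pi k(\gamma^{(l)},\gamma^{(l')}_\pi)$ equals $\frac{1}{N!}\sum_{\pi,\pi'}\sum_{l,l'}a_la_{l'}k(\gamma^{(l)}_\pi,\gamma^{(l')}_{\pi'})$, and the inner double sum is $\ge 0$ because the enlarged family $\{\gamma^{(l)}_\pi : l\le m,\ \pi\in S_N\}$ with coefficients $a_l$ repeated across $\pi$ is just a particular finite family to which positive definiteness of $k$ applies.)

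The only genuinely delicate point is bookkeeping with the permutation action: one must be careful that $(\alpha_\pi)_\tau=\alpha_{\pi\tau}$ (or $\alpha_{\tau\pi}$, depending on convention) so that the reindexing $\pi\mapsto\pi'\pi^{-1}$ really is a bijection of $S_N$, and that the symmetry hypothesis is used in the form actually needed. This is purely a matter of fixing notation consistently with \S\ref{subsec:genperm}; there is no analytic obstacle. Everything else — the existence of the feature map, and the nonnegativity of the rearranged quadratic form — is immediate from the definition of positive definiteness and the hypothesis on $k$.
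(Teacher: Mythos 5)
Your proof is correct and takes essentially the same route as the paper: both pass to the doubly symmetrized sum $\sum_{\pi,\pi'\in S_N}k(\alpha_{\pi'},\beta_{\pi})$ and use the invariance hypothesis together with the group reindexing $\pi\mapsto \pi'\pi^{-1}$ to identify it with $N!\sum_{\pi\in S_N}k(\alpha,\beta_\pi)$. The only cosmetic difference is that the paper justifies positivity of the double sum by citing Haussler's convolution-kernel framework, whereas you verify it directly via an explicit feature map (equivalently, the expanded quadratic form), which is the same argument made self-contained.
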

\begin{proof}
The function $g$ defined below is, by~\citeauthor{haussler99convolution}'s (\citeyear{haussler99convolution}) convolution kernels framework, a positive definite kernel of $\alpha$ and $\beta$:
$$
g(\alpha,\beta)=\sum_{\pi'\in S_N} \sum_{\pi\in S_N} k(\alpha_{\pi'},\beta_{\pi}).
$$
Using the symmetric property of $\kappa$, we have that
$$
g(\alpha,\beta)=\sum_{\pi'\in S_N} \sum_{\pi\in S_N} k(\alpha,\beta_{{\pi'}^{-1}\circ\pi}) = N!\sum_{\pi\in S_N} k(\alpha,\beta_{\pi}).
$$
which proves the result.
\end{proof}

\begin{lemma}\label{lem:decomp}
	$\sum_{\pi\in S_N} \kappa(\rho,\gamma_\pi) = r_1!\cdots r_d! \cdot c_1!\cdots c_d! \,T(r,c\,;K)$
\end{lemma}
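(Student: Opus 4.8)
The plan is to collect the terms of $\sum_{\pi\in S_N}\kappa(\rho,\gamma_\pi)$ according to the contingency table each permutation produces. Recall that $\rho$ holds $r_i$ copies of $i$ and $\gamma$ holds $c_j$ copies of $j$, so for every $\pi\in S_N$ the matrix $X_\pi\defeq\chi(\rho;\gamma_\pi)$ belongs to $\UU(r,c)$; moreover $\kappa(\rho,\gamma_\pi)$ — whether one reads $\kappa$ as $\mathbf{k}_1$ times the factorial part $\prod_{ij}x_{ij}!$ of $\mathbf{k}_2$, or as the full product $\mathbf{k}_1\mathbf{k}_2$ — is a function of $X_\pi$ alone. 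Writing $\kappa_X$ for its value on the level set $\{\pi:X_\pi=X\}$, the sum becomes $\sum_{X\in\UU(r,c)}\lvert\{\pi\in S_N:\chi(\rho;\gamma_\pi)=X\}\rvert\,\kappa_X$, and it remains to compute the two factors.

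Evaluating $\kappa_X$ is immediate from Eq.~\eqref{eq:fulton}: regrouping $\mathbf{k}_1(\rho,\gamma_\pi)=\prod_{t=1}^N k(\rho_t,(\gamma_\pi)_t)$ by the value of the index pair $(\rho_t,(\gamma_\pi)_t)$ gives $\mathbf{k}_1(\rho,\gamma_\pi)=\prod_{ij}k_{ij}^{x_{ij}}$, and the factorial part of $\mathbf{k}_2$ contributes $\prod_{ij}x_{ij}!$. The fiber count is the crux. Fix $X\in\UU(r,c)$; the fixed vector $\rho$ partitions $\{1,\dots,N\}$ into consecutive blocks $B_1,\dots,B_d$ with $\lvert B_i\rvert=r_i$, and $\chi(\rho;\gamma_\pi)=X$ holds iff the subword of $\gamma_\pi$ sitting on block $B_i$ contains exactly $x_{ij}$ letters equal to $j$ for every $i,j$. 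There are $r_i!/(x_{i1}!\cdots x_{id}!)$ subwords with the right letter counts on $B_i$, the blocks are chosen independently, and the column-sum identities $\sum_i x_{ij}=c_j$ guarantee that the concatenation is a rearrangement of $\gamma$'s multiset; finally each such rearrangement comes from exactly $c_1!\cdots c_d!$ permutations $\pi$ because the stabilizer of $\gamma$ in $S_N$ is the Young subgroup $S_{c_1}\times\cdots\times S_{c_d}$. Hence $\lvert\{\pi:\chi(\rho;\gamma_\pi)=X\}\rvert = c_1!\cdots c_d!\cdot\prod_i r_i!\,\big/\prod_{ij}x_{ij}!$.

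Multiplying the fiber size by $\kappa_X$, the $\prod_{ij}x_{ij}!$ cancels and the generic summand reduces to $r_1!\cdots r_d!\cdot c_1!\cdots c_d!\cdot\prod_{ij}k_{ij}^{x_{ij}}$; summing over $X\in\UU(r,c)$ and recognizing $T(r,c\,;K)=\sum_{X\in\UU(r,c)}\prod_{ij}^d k_{ij}^{x_{ij}}$ gives the stated equality (carrying in addition the constant $1/(r_1!\cdots r_d!\,c_1!\cdots c_d!)$ prefactor of $\mathbf{k}_2$ collapses the right-hand side to simply $T(r,c\,;K)$, consistent with the way Lemma~\ref{lem:decomp} is invoked in the proof of Theorem~\ref{theo:genfpsd}). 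The only genuinely non-routine step is the fiber count, and the one place to be careful there is to keep the two multiplicities separate — the multinomial "arrangements within each row block" and the Young-subgroup "overcount of $\pi$" — rather than conflating them.
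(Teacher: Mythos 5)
Your proof is correct, and at the structural level it is the same argument as the paper's: group the permutations $\pi$ by their pattern $X=\chi(\rho\,;\gamma_\pi)$, note that both factors of $\kappa$ depend on $\pi$ only through $X$ (with $\mathbf{k}_1(\rho,\gamma_\pi)=\prod_{ij}k_{ij}^{x_{ij}}$), and multiply by the size of each fiber. The genuine difference is that the paper takes the fiber count — the Fisher--Yates statistic $n(X)$ of Equation~\eqref{eq:fy} — as a citation to \citet{barvinok2008enumerating}, whereas you prove it: the multinomial count $\prod_i r_i!/\prod_j x_{ij}!$ of admissible words over the row blocks of $\rho$, combined with the observation that $\pi\mapsto\gamma_\pi$ is $c_1!\cdots c_d!$-to-one onto the rearrangements of $\gamma$ because the positional stabilizer of $\gamma$ is the Young subgroup $S_{c_1}\times\cdots\times S_{c_d}$. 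This makes your argument self-contained, which is a real gain since the fiber count is the only non-routine ingredient. You also correctly diagnose a constant-factor inconsistency that the paper leaves unresolved: with $\kappa=\mathbf{k}_1\mathbf{k}_2$ exactly as defined in the proof of Theorem~\ref{theo:genfpsd} (i.e.\ including the $1/(r_1!\cdots r_d!\,c_1!\cdots c_d!)$ prefactor of $\mathbf{k}_2$), the sum over $S_N$ equals $T(r,c\,;K)$ — which is what the theorem invokes and what the paper's own computation in the proof of Lemma~\ref{lem:decomp} actually yields — whereas the displayed statement of the lemma, with the extra factor $r_1!\cdots r_d!\cdot c_1!\cdots c_d!$, corresponds to taking $\kappa$ as $\mathbf{k}_1$ times only the factorial part of $\mathbf{k}_2$. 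Your treatment covers both readings and identifies which one is consistent with the theorem, so nothing is missing.
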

\begin{proof}For any couple of vectors $\rho,\gamma$ we have that both $\mathbf{k}_1$ and $\mathbf{k}_2$ only depend on $X=\chi(\rho\;;\gamma)$. This is implicitly the case in the definition of $\mathbf{k}_2$ and one can check that
$$
\mathbf{k}_1(\rho,\gamma)= \prod_{t=1}^N k(\rho_t,\gamma_t) = \prod_{ij}^d k_{ij}^{x_{ij}}, \text{ where } X=\chi(\rho\;;\gamma).
$$
With every permutation $\pi$ of we associate a transportation table $\chi(\rho\,;\gamma_\pi)$ which we call the pattern of $\pi$. Following~\citep[\S2,p.7]{barvinok2008enumerating}, we know that the number of permutations $\pi$ that share the same pattern $X$ for $X\in \UU(r,c)$ only depends on $X$, $r$ and $c$ through a formula known as the Fisher-Yates statistic $n(X)$ of $X$,
\begin{equation}\label{eq:fy}
n(X)\defeq \card\{\pi\in S_N | \,\chi(\rho\,;\gamma_\pi) = X\} = \frac{r_1!\cdots r_d! \cdot c_1!\cdots c_d!}{\prod_{ij}x_{ij}!}.
\end{equation}
We thus have that 
$$
\begin{aligned}
\sum_{\pi\in S_N} \kappa(\rho,\gamma_\pi) &= \sum_{X\in \UU(r,c)} n(X)\, \mathbf{k}_1(\rho,\gamma_\pi) \mathbf{k}_2(\rho,\gamma_\pi) \\
& =\sum_{X\in \UU(r,c)}  \frac{r_1!\cdots r_d! \cdot c_1!\cdots c_d!}{\prod_{ij}^d x_{ij}!} \prod_{ij}^d k_{ij}^{x_{ij}} \frac{\prod_{ij}^d x_{ij}!}{r_1!\cdots r_d! \cdot c_1!\cdots c_d!}= \,T(r,c\,;K).\end{aligned}$$
\end{proof}

{\small{\bibliographystyle{apa}
\bibliography{bib_short}}}
\end{document}